\newtheorem{theorem}{Theorem}
\newtheorem{corollary}{Corollary}
\newtheorem{definition}{Definition}
\newcommand{\SumNode}{\mathsf{S}}
\newcommand{\ProductNode}{\mathsf{P}}
\newcommand{\TransformationNode}{\mathsf{T}}
\newcommand{\LeafNode}{\mathsf{L}}
\newcommand{\Node}{\mathsf{N}}
\newcommand{\cbar}{\,|\,}
\newcommand{\scope}{\ensuremath{\psi}}
\newcommand{\ch}{\ensuremath{\mathbf{ch}}}
\newcommand{\graph}{\mathcal{G}}
\title{Sum-Product-Transform Networks: Exploiting Symmetries using Invertible Transformations}
\author{ \bf{Tom\'a\v s Pevn\'y\thanks{Tom\'a\v s Pevn\'y is also with Avast Software s.r.o},  V\'aclav \v Sm\'idl}\\
Artificial Intelligence Center\\
Czech Technical University\\
Prague, Czech republic\\
\And
{\bf Martin Trapp}  \\
Graz University of Technology \\
Graz, Austria \\
\And
\bf{Ond\v{r}ej Pol\'{a}\v{c}ek, Tom\'a\v s Oberhuber} \\
Faculty of Nuclear Sciences \\and Physical Engineering\\
Czech Technical University\\
Prague, Czech republic\\
% \And
% \bf{} \\
% Faculty of Nuclear Sciences and Physical Engineering\\
% Czech Technical University\\
% Prague, Czech republic\\
}
\begin{document}
\maketitle

\newcommand{\sumnode}{\ensuremath{\Pi}}
\newcommand{\productnode}{\ensuremath{\Sigma}}
\newcommand{\densenode}{\ensuremath{\Phi}}
\newcommand{\setindex}[1]{\ensuremath{\mathbf{#1}}}

\begin{abstract}
% Motivation
In this work, we propose\textbf{} Sum-Product-Transform Networks (SPTN), an extension of sum-product networks that uses invertible transformations as additional internal nodes. 
The type and placement of transformations determine properties of the resulting SPTN with many interesting special cases. 
Importantly, SPTN with Gaussian leaves and affine transformations pose the same inference task tractable that can be computed efficiently in SPNs. 
We propose to store affine transformations in their SVD decompositions using an efficient parametrization of unitary matrices by a set of Givens rotations. 
Last but not least, we demonstrate that G-SPTNs achieve state-of-the-art results on the density estimation task and are competitive with state-of-the-art methods for anomaly detection. 

\end{abstract}

\section{INTRODUCTION}\label{sec:motivation}

Modeling and manipulating complex joint probability distributions are central goals in machine learning.
Its importance derives from the fact that probabilistic models can be understood as multi-purpose tools, allowing them to solve many machine learning tasks using probabilistic inference.
However, recent flexible and expressive techniques for density estimation, such as normalizing flows~\parencite{Rezende2015Flows,kobyzev2019normalizing} and neural auto-regressive density estimators~\parencite{Uria2016NADE}, lack behind when it comes to performing inference tasks efficiently.
Motivated by the absence of tractable probabilistic inference capabilities, recent work in probabilistic machine learning has put forth many instances of so-called Probabilistic Circuits (PCs), such as Sum-Product Networks (SPNs)~\parencite{Poon11}, Probabilistic Sentential Decision Diagrams (PSDDs)~\parencite{kisa2014probabilistic} and Cutset network~\parencite{rahman2014cutset}.
In contrast to auto-regressive and flow-based techniques, PCs guarantee that many inference tasks can be computed exactly in time linear in their representation size.
The critical insights for PCs are that: i) high-dimensional probability distributions can be efficiently represented by composing convex combinations, factorizations, and tractable input distributions; and that ii) decomposability~\parencite{Darwiche2003BayesianNets} simplifies many inference scenarios to tractable inference at the input distributions.
Due to their favorable properties, PCs have been successfully applied for many complex machine learning tasks, e.g.~image segmentation~\parencite{Rathke2017,Stelzner2019}, semantic mapping~\parencite{Zheng2018}, and image classification~\parencite{Peharz2019RAT}.

\begin{figure}[t]
\centering{\includegraphics[width=\linewidth]{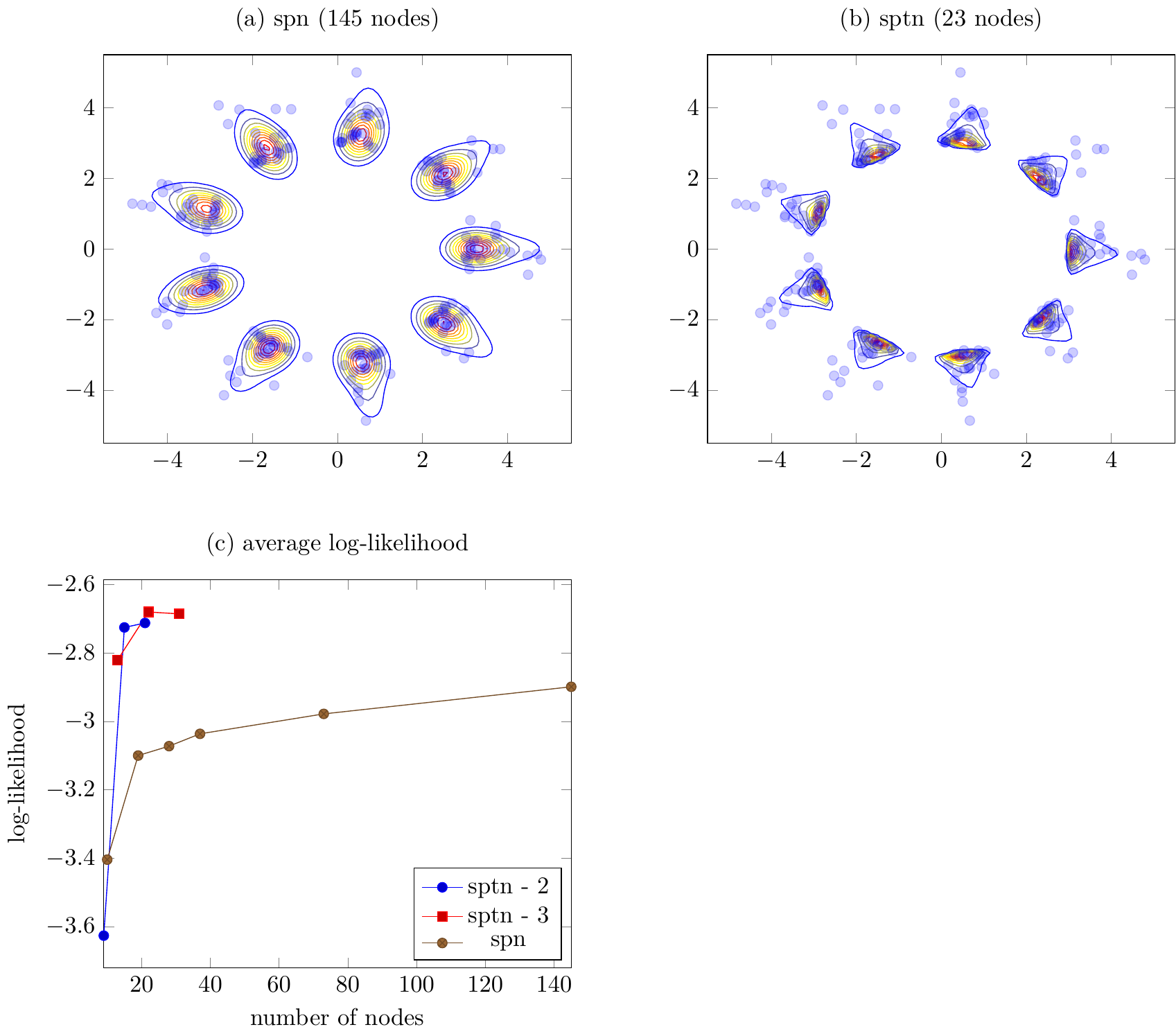}}
\caption{\label{fig:flower}Figures (a-b), respectively, show the density function of an SPN and SPTN overlayed onto a subset of training data. SPTNs can fit the data more effectively, by exploiting transformations of the density function and result in a more compact representation, c.f.~Figure (c).}
\end{figure}

To model complex probability distributions, PCs leverage a hierarchy of convex combinations and factorizations, resulting in a compact representation of an exponentially large mixture distribution.
However, by restricting to compositions of tractable input distributions using convex combinations and factorizations, PCs cannot exploit geometric properties, such as symmetries, in the density function and lack a compact representation in low-dimensional scenarios.
Thus, potentially resulting in inefficient representations of complex joint probability distributions in various scenarios.

In this paper, we propose to extend the compositions used in PCs to additionally include invertible transformations.
In particular, we introduce Sum-Product-Transformation Networks (SPTNs), which combine SPNs, i.e.~complete and decomposable PCs, with an additional change of variables transformations.  
The resulting model class naturally combines tractable computations in normalizing flows with tractable computations in SPNs.
SPTNs are an expressive and flexible probabilistic model that enables exploitation of the geometry, e.g.~symmetries, while facilitating tractable inference scenarios, depending on the network structure.

As an example, consider the \emph{flower} dataset illustrated in Figure~\ref{fig:flower}, which consists of nine petal leaves.
In this example, an SPN, as well as our model, was trained using $2\mathrm{D}$ Gaussian (full covariance) leaves. 
Naturally, one would leverage the fact that data distribution can be modeled efficiently by applying affine transformations, i.e.~using rotations around the origin.
However, in the case of SPNs, this cannot be exploited, and the dimensionality of the data naturally limits the depth of the model.
On the other hand, SPTNs can compactly represent even complex low-dimensional distributions as their depth is not limited by the dimensionality of the data distribution.
Further, SPTNs can exploit symmetries in the data, resulting in higher predictive performance, c.f.~Figure~\ref{fig:flower} (c).
In a variety of experiments, we show that SPTNs indeed achieve high predictive performance on various datasets and often outperform SPNs in terms of test log-likelihood.

Our main contributions can be summarised as follows:
\begin{itemize}
    \item We introduce an extension of probabilistic circuits (PCs) which interleaves common compositions in PCs with invertible transformations, resulting in a flexible tractable probabilistic model. This unifies two paradigms: probabilistic circuits and flow models into a single framework.
    \item We introduce a new affine flow, and conjecture that for many interesting applications an affine transformation is sufficient.% and hence we can normalize. 
    Our affine flow has a native parametrization in SVD decomposition, which allows (i) efficient inverse and (ii) efficient calculation of determinant of the Jacobian and its gradients. 
    \item We introduce a tractable subclass, G-SPTNs, consisting of sum and product nodes, Gaussian leaves, and only affine transformations. G-SPTNs support efficient marginalization and computation of conditionals.
    \item Finally, we show on 21 benchmark datasets that G-SPTNs deliver better performance than SPNs, GMMs, and Masked Autoregressive Flows (MAF) in tasks of density estimation and anomaly detection.
\end{itemize}

\section{BACKGROUND}
Probabilistic circuits (PCs) are a large class of tractable probabilistic models that admit many probabilistic inference tasks in linear time (linear in their representation size).
\begin{definition}[Probabilistic Circuit]
Given a set of random variables (RVs) $\mathbf X$, a Probabilistic Circuit (PC) is defined as tuple $(\graph, \scope, \theta)$ consisting of a computational graph $\graph = (V,E)$, which is a directed acyclic graph (DAG) containing sum, product and leaf nodes, a scope-function $\scope: V \rightarrow 2^{\mathbf{X}}$ and a set of parameters $\theta$.
\end{definition}

In general, we additionally expect the \emph{scope-function} to fulfil the following properties: i) for all internal nodes $\Node \in V$ we have $\scope(\Node) = \bigcup_{\Node' \in \ch(\Node)} \scope(\Node')$ and ii) for each root node $\Node$, i.e.~each node without parents, we have $\scope(\Node) = \mathbf{X}$.
To guarantee many inference scenarios to be tractable, we additionally require the scope-function to fulfil that, for each product node $\ProductNode \in V$ the scopes of the children of $\ProductNode$ are disjoint, i.e.~$\bigcap_{\Node' \in \ch(\ProductNode)} \scope(\Node') = \emptyset$ (\emph{decomposability}).
In this paper, we further assume that, for each sum node $\SumNode \in V$ that $\scope(\Node) = \scope(\Node')\, \forall \Node, \Node' \in \ch(\SumNode)$ (\emph{completeness/smoothness)}.
Complete/smooth and decomposable PCs are often referred to as Sum-Product Networks (SPNs).

In an SPN, each leaf node $\LeafNode \in V$ is a (tractable) distribution over its scope $\scope(\Node)$, parametrized by $\theta_\LeafNode$.
Internal nodes, either compute a weighted sum with non-negative weights (sum node) of its children, i.e.~$\SumNode(x) = \sum_{\Node \in \ch(\SumNode)} w_{\SumNode,\Node} \Node(x)$ with $w_{\SumNode,\Node} \geq 0$, or compute a product of its children (product node), i.e.~$\ProductNode(x) = \prod_{\Node \in \ch(\ProductNode)} \Node(x_{\scope(\Node)})$, where $\ch(\Node)$ returns the set of children of node $\Node$.
Note that w.l.o.g.~we assume that all sum nodes are normalised, i.e.~$\sum_{\Node \in \ch(\SumNode)} w_{\SumNode,\Node} = 1$, c.f.~\parencite{Peharz2015}.

SPNs have recently gained increasing attention, due to their success in various applications, e.g.~\parencite{Stelzner2019,Peharz2019RAT}.  
Inspired by these successes, various flexible extensions of SPNs have recently been proposed, e.g.~SPNs over variational autoencoders (VAEs)~\parencite{tan2019hierarchical}, SPNs over Gaussian processes~\parencite{Trapp2020} and quotient nodes to represent conditional distributions within the SPN~\parencite{Sharir18}.
However, to the best of our knowledge, SPNs and PCs have not been extended to incorporate invertible transformations as of yet.

\section{SUM-PRODUCT- TRANSFORMATION~NETWORKS}

Sum-Product-Transformation Networks (SPTNs) naturally combine the taxonomy of SPNs with normalizing flows.
In SPTNs, we extend PCs to additionally include nodes representing a change of variables formulas. 

\begin{definition}[Sum-Product-Transformation Network]
A Sum-Product-Transformation Network (SPTN) over a set of RV $\mathbf{X}$ is an extension of PCs which is recursively defined as:
\begin{itemize}
    \item An arbitrary (tractable) input distribution is an SPTN (leaf node), i.e.~$\LeafNode(x) = p(x \cbar \theta_{\LeafNode})$.
    \item A product of SPTNs is an SPTN (product node), i.e.~$\ProductNode(x) = \prod_{\Node \in \ch(\ProductNode)} \Node(x_{\scope(\Node)})$.
    \item A convex combination of SPTNs is an SPTN (sum node), i.e.~$\SumNode(x) = \sum_{\Node \in \ch(\SumNode)} w_{\SumNode,\Node} \Node(x)$ with $w_{\SumNode,\Node} \geq 0$.
    \item An invertible transformation of an SPTN is an SPTN (transformation node), i.e.~$\TransformationNode(\Node(x)) = \Node(g(x))\det |J_{g}(x)|$ where $g(x)$ is a bijection and $J_{g}(x)$ denotes the Jacobian of the transformation.
\end{itemize}
\end{definition}

In the course of this paper, we will generally assume SPTNs to be complete/smooth and decomposable.
Note that those properties are akin to completeness and decomposability in SPNs, as transformation nodes ($\TransformationNode$) have only a single child and, thus, $\scope(\TransformationNode) = \scope(\Node) \forall \Node \in \ch(\TransformationNode)$.

\subsection{REALIZATION OF TRANSFORMATION~NODES}
To calculate the density function of a random variable $X$ after applying a transformation $f(x)$, requires that $f(x)$ is invertible, i.e.~it is a bijection.
For practical reasons, it is desired the determinant of the Jacobian of $f(x)$ to be efficiently calculated.
A recent technique that satisfies these properties is known as normalizing flows, we refer to~\parencite{papamakarios2019normalizing} for an overview, which either imposes a special structure on $f$ or relies on properties of the ODE equations.
We extend this family by introducing a variant of dense layers in feed-forward networks, which allows efficient inversion and computation of the Jacobian.

Feed-forward neural networks implement a function 
\begin{equation}
f(x) = \sigma(\mathbf{W}x + b) \, ,
\label{eq:dense}
\end{equation}
where $\mathbf{W}$ is a weight matrix, $b$ is a bias term, and $\sigma(x)$ is a (non-)linear transformation.
In the case of change of variables, $\mathbf{W}\in\mathbb{R}^{d,d}$ has to be a square matrix with full rank, due to the requirement of invertibility, and 
$b \in \mathbb{R}^d$. 
Since $\mathbf{W}$ has to be full rank, it can be expressed using a Singular Value Decomposition (SVD), i.e.~$\mathbf{W} = \mathbf{U}\mathbf{D}\mathbf{V}^{\top}$, where $\mathbf{U}$ and $\mathbf{V}$ are unitary matrices and $\mathbf{D}$ is a diagonal matrix.
SVD decompositions allow for a convenient calculation of the inverse of $f$ as 
\begin{equation}
    f^{-1}(z) =  \mathbf{V}\mathbf{D}^{-1}\mathbf{U}^{\top}(\sigma^{-1}(z) - b) \, .
\end{equation}
Further, we can conveniently calculate the logarithm of the determinant of the Jacobian as it holds that
\begin{equation}
    \log \left(\left|\frac{\partial f}{\partial x}\right| \right) = \sum_{i=1}^{d} \log |d_{ii}| + \sum_{i=1}^{d} \log \left|\frac{\partial \sigma_i}{\partial o_i}\right| \, ,
\end{equation}
where $o = \mathbf{U}\mathbf{D}\mathbf{V}^{\top}x + b$.

While the SVD decomposition has appealing properties, it is generally very expensive to calculate.
Therefore, we propose to store and optimize $\mathbf{W}$ in its SVD decomposition. 
This is possible if the group of unitary matrices $\mathcal{U}$ can be parametrized by $\theta \in \mathbb{R}^{\frac{1}{2}d(d-1)}$ such that (i) $\mathbf{U}(\theta) \in \mathbb{R}^{\mathrm{d,d}}$ is a unitary square matrix for arbitrary $\theta \in \mathbb{R}^{\frac{1}{2}d(d-1)}$ and (ii) for every unitary matrix $\mathbf{U}^{\prime}$ there exists $\theta^{\prime}$ such that $\mathbf{U}^{\prime} = \mathbf{U}(\theta^{\prime});$ and (iii) a gradient $\frac{\partial \mathbf{U}(\theta)}{\partial \theta}$ exists and can be computed efficiently.  
We discus two approaches to parametrize $\mathcal{U}$ under these conditions in Section~\ref{sec:unitaryMatrices}.

We want to emphasize that the parameters of transformation nodes can be shared within the model since they do not have any probabilistic interpretation. 
This allows a compact representation of transformation nodes within the model.
The type of transformation and their placements in the computational graph has an impact on the tractability of the resulting model. 
Therefore, we will now discuss a few important special cases:

\paragraph{Affine Gaussian SPTN (G-SPTN):}
SPTNs with Gaussian leaves and arbitrarily placed affine transformations can be transformed into an exponentially large mixture of Gaussians, c.f.~Theorem~\ref{thm:gmm}.
This has an important consequence as marginalization is now analytically tractable, which arises from the fact that affine-transformed Gaussian distributions remain Gaussian. 

\paragraph{Flow models:}
Any SPTN consisting only of transformation and product nodes is a flow. 
Note, however, that marginalization and computation of moments are generally not tractable.

\paragraph{SPN with Flexible Leaves:}
An SPTN with transformations only above the leaf nodes extends the set of possible leaf node distributions.  
Since the transformation is deferred only to leaves, in the univariate case, tractability is generally preserved.

Hence, it is possible to exploit tractability in certain parts of the model, while sacrificing it in favor of complex transformations in others. 
Allowing the practitioner to design the model as needed.

\begin{theorem}\label{thm:gmm}
Inference tasks that are tractable in SPNs are also tractable in SPTN with affine transformation nodes and Gaussian distribution at the leaves.
%can be expanded to a mixture of Gaussians.
\end{theorem}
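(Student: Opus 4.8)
The plan is to reduce the statement to a single structural fact: a G-SPTN with Gaussian leaves and affine transformation nodes represents a finite mixture of multivariate Gaussians over its scope, exactly as a Gaussian-leaf SPN does. Once this is established, every inference task that is tractable for SPNs — evaluating evidence, marginalizing out a subset of variables, and forming conditionals — reduces to the corresponding operation on a Gaussian mixture, and such operations stay inside the Gaussian-mixture class and can be evaluated by a recursive pass over the circuit. So the core of the argument is a closure property together with an induction on the DAG.

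First I would prove the base ingredient: an affine transformation node applied to a Gaussian returns a Gaussian. If the child computes $\mathcal{N}(z;\mu,\Sigma)$ and the node applies the affine bijection $g(x)=\mathbf{A}x+b$ with $\mathbf{A}$ invertible, then by the definition of $\TransformationNode$ we have $\TransformationNode(x)=\mathcal{N}(\mathbf{A}x+b;\mu,\Sigma)\,|\det \mathbf{A}|$. Completing the square in the exponent and collecting normalizing constants, the Jacobian factor $|\det \mathbf{A}|$ cancels exactly against the change in the covariance determinant, and one obtains $\TransformationNode(x)=\mathcal{N}(x;\mu',\Sigma')$ with $\mu'=\mathbf{A}^{-1}(\mu-b)$ and $\Sigma'=\mathbf{A}^{-1}\Sigma\,\mathbf{A}^{-\top}$. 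This is a routine computation, and crucially the resulting density is automatically normalized, which is what lets the affine map be folded into the Gaussian parameters rather than tracked as a separate factor.

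Next I would run a structural induction over the computational graph with the hypothesis that every node $\Node$ computes a finite mixture of Gaussians over $\scope(\Node)$. The leaf case is the single-component mixture. For a sum node the claim is immediate, since a convex combination of Gaussian mixtures is again a Gaussian mixture. For a product node whose children have disjoint scopes, expanding the product of the children's mixtures into a sum over tuples of component indices yields a mixture whose components are products of Gaussians on disjoint coordinate blocks, i.e.~block-diagonal Gaussians on the union scope; this is the step that blows the component count up exponentially while keeping the density a Gaussian mixture. For a transformation node the constant $|\det \mathbf{A}|$ distributes over the mixture and the base lemma applies to each component, so the output is again a Gaussian mixture. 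Hence the root computes a (generally exponentially large) mixture of Gaussians, which is precisely the fact advertised in the G-SPTN paragraph.

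Finally I would argue tractability of the SPN-style queries. Marginalizing a Gaussian is selecting the relevant blocks of its mean and covariance, and conditioning uses the standard Gaussian conditioning formulas; both send Gaussians to Gaussians, hence Gaussian mixtures to Gaussian mixtures. These operations distribute over the convex combinations at sum nodes and over the disjoint scopes at product nodes, so they can be carried out by a recursive traversal of the circuit rather than on the flat exponential mixture. The hard part, and the only place where G-SPTNs genuinely differ from SPNs, is the transformation node: because an affine map mixes coordinates, marginalization and conditioning do \emph{not} decompose across it in the naive SPN sense. This is exactly what the closure property above resolves — the affine map is absorbed into each component's Gaussian parameters via the base lemma, after which marginalization and conditioning act as plain block operations — so the transformation nodes never obstruct the recursion and the queries remain exact and closed-form, of the same character as in a Gaussian-leaf SPN.
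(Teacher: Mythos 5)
Your proposal is correct and follows essentially the same route as the paper's proof: the affine-closure property of Gaussians, the block-diagonal product of Gaussians on disjoint scopes, and the representation of the circuit as an exponentially large Gaussian mixture into which the affine maps are absorbed (``pulled down'') so that marginalization and conditioning reduce to block operations on the component parameters. The only cosmetic differences are that you establish the exponential-mixture representation by a direct structural induction on the DAG where the paper imports it from the induced-tree decomposition of \cite{zhao2016collapsed,Trapp2019}, and that you state the transformation lemma in the pullback form $\mu'=\mathbf{A}^{-1}(\mu-b)$, $\mathbf{\Sigma}'=\mathbf{A}^{-1}\mathbf{\Sigma}\mathbf{A}^{-\top}$, which matches the definition of $\TransformationNode$ more directly than the paper's forward-pushforward statement.
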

\begin{proof}
Let the SPTN be composed of sum, product, affine transformation, and Gaussian leaf nodes, i.e.~an G-SPTN.
Further, let us assume that all $\mu_{\cdot}$ are vectors and all $\mathbf{\Sigma}_{\cdot}$ are matrices of appropriate dimensions. 

Then, 
(i) \emph{An affine transformation of a Gaussian distributed vector is Gaussian.}
Specifically, let $x\sim \mathcal{N}(\mu_x, \mathbf{\Sigma}_x)$.  Then $y \sim \mathcal{N}(\mu_y, \mathbf{\Sigma}_y)$ with $\mu_y = \mathbf{W}\mu_x + b$ and $\mathbf{\Sigma}_y = W\mathbf{\Sigma}_xW^{\top}$.

(ii) \emph{The product distribution of Gaussian distributed vectors is Gaussian.}
Let $x_1\sim \mathcal{N}(\mu_1, \mathbf{\Sigma}_1)$ and $x_2\sim \mathcal{N}(\mu_2, \mathbf{\Sigma}_2)$.
Then,
\begin{equation}
    \begin{bmatrix} x_1 \\ x_2 \end{bmatrix} \sim \mathcal{N} \left(\begin{bmatrix} \mu_1 \\ \mu_2 \end{bmatrix}, \begin{bmatrix} \mathbf{\Sigma}_1 & 0 \\ 0 & \mathbf{\Sigma}_2 \end{bmatrix}\right) \, .
\end{equation}

(iii) \emph{The density function of any SPN can be represented by an exponentially large mixture.}
As shown in \cite{zhao2016collapsed,Trapp2019}, any SPN can be represented by an exponentially large mixture distribution over so-called induced trees.
The same applies to SPTNs as transformation nodes have only a single child, i.e., if a transformation node is included in the induced tree its child and the respective edge will also be included.

By applying (i)-(iii), we can express each component of the implicitly represented exponentially large mixture (of an SPTN) as a transformed Gaussian distribution with a block-diagonal covariance structure determined by the scope-function.
Even though, a Gaussian mixture model representation is not very useful per se, it shows that an SPTN with the affine transformations ``pulled down'' to the leaves is equivalent to the original SPTN.
Therefore, we can perform arbitrary marginalization tasks in a G-SPTN by ``pulling down'' the affine transformations and performing inference directly on the transformed leaves.
\end{proof}

\begin{corollary}
Marginal and conditional distributions of G-SPTN have the same analytical properties as SPNs using Gaussian distributions at the leaves with a block-diagonal covariance structure.
\end{corollary}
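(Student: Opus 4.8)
The plan is to prove Theorem~\ref{thm:gmm} by establishing the three auxiliary facts (i)--(iii) and then composing them structurally over the computational graph. The central idea is the \emph{pull-down} argument: every affine transformation node in an G-SPTN can be pushed toward the leaves, because affinely transforming a Gaussian yields another Gaussian (fact (i)), and the only obstruction to commuting a transformation past a product node is that the product concatenates scopes --- but fact (ii) shows that the joint distribution of independent Gaussians is again Gaussian with block-diagonal covariance. First I would verify fact (i) by the standard change-of-variables computation: for $y = \mathbf{W}x + b$ with $x \sim \mathcal{N}(\mu_x, \mathbf{\Sigma}_x)$, substituting into the Gaussian density and using $\det|J| = |\det \mathbf{W}|$ gives $y \sim \mathcal{N}(\mathbf{W}\mu_x + b, \mathbf{W}\mathbf{\Sigma}_x \mathbf{W}^{\top})$. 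Fact (ii) is immediate from independence of the children's scopes (guaranteed by decomposability), which forces the cross-covariance blocks to vanish.

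Next I would invoke the \emph{induced-tree} decomposition of fact (iii): results of \cite{zhao2016collapsed,Trapp2019} express any SPN as an exponentially large mixture indexed by induced trees, where each sum node selects exactly one child and each product node retains all children. I would argue this carries over verbatim to SPTNs because a transformation node, having a single child, contributes deterministically to every induced tree --- it never introduces a branching choice. Consequently each mixture component is a tree of product and transformation nodes terminating in Gaussian leaves. Walking such a tree bottom-up, facts (i) and (ii) guarantee the component density is a single (possibly high-dimensional) Gaussian whose covariance is block-diagonal with block structure dictated by the scope-function at the product nodes. This is precisely the pull-down: the component equals an ordinary Gaussian-leaf SPN component after the affine maps have been absorbed into the leaf parameters.

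The main obstacle I anticipate is making the induction over the transformation-past-product commutation rigorous rather than merely plausible. The subtlety is that a transformation node sits \emph{above} a product node, so its affine map acts on the full (concatenated) scope, whereas the product's children live on disjoint sub-scopes; one must check that pulling the transformation down does not respect the block structure and instead generally produces a Gaussian with a \emph{full} covariance coupling the sub-scopes. The honest statement is therefore that transformations can be absorbed into the leaves only after the tree is flattened into a single Gaussian per induced component, and the ``block-diagonal'' structure referred to in the corollary holds for the covariance induced purely by the product nodes \emph{before} any transformation above them is applied. I would make this precise by processing each induced tree as an alternating composition: product nodes assemble block-diagonal covariances, transformation nodes conjugate the current covariance by $\mathbf{W}$ (fact (i)), and the resulting Gaussian is well-defined regardless of ordering.

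Finally, for the corollary I would observe that since each G-SPTN mixture component is exactly a Gaussian and the mixture weights are the products of sum-node weights along each induced tree (identical to the SPN case), marginalization reduces to marginalizing each Gaussian component --- which for a Gaussian is obtained by selecting the relevant sub-vector of the mean and sub-block of the covariance. Conditioning follows from the standard Gaussian conditioning formula applied componentwise, with the mixture weights reweighted by the marginal likelihoods of the conditioning variables. Thus the analytical tractability of marginals and conditionals in G-SPTNs is inherited directly from that of Gaussian-leaf SPNs, with the only structural difference being that the per-component covariances need not be diagonal but are block-diagonal (or, after pull-down through transformations, full within each block). This establishes that the inference tasks tractable in SPNs remain tractable in G-SPTNs, completing the argument.
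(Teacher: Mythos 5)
Your proposal is correct and follows essentially the same route as the paper: the corollary is obtained exactly as in the paper's proof of Theorem~\ref{thm:gmm} --- facts (i)--(iii), the induced-tree mixture representation carried over to SPTNs via single-child transformation nodes, and the pull-down of affine maps --- followed by componentwise Gaussian marginalization and conditioning with reweighted mixture weights. Your added caveat that a transformation sitting above a product node couples the sub-scopes, so each component is a \emph{transformed} block-diagonal Gaussian (full covariance within the transformed block) rather than literally block-diagonal, is a sharpening of what the paper leaves implicit in its phrase ``transformed Gaussian distribution with a block-diagonal covariance structure,'' not a different argument.
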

%This follows from the expansion of a G-SPTN to a GMM and linearity of %integration, specifically $p(x_1) = \int p(x_1,x_2) \mathrm{d}{x_2} = \sum_{i=1}^{n} w_i \int q_i(x_1,x_2) \mathrm{d}{x_2}$. Thus, the conditionals can be calculated from the ratio of marginals.

\paragraph{NODE SHARING}
SPTNs allow multiple ways of reducing the number of parameters via node sharing. Since the introduced transformation is just a new type of node, it can be shared in the computational graph just like the sum and product nodes.  This is illustrated in schematics, such as in Figure~\ref{fig:sharing}. 

\section{PARAMETERIZING UNITARY MATRICES}
\label{sec:unitaryMatrices}
We will now describe two methods to parametrize a group of unitary matrices, each having its advantages and disadvantages.\footnote{Both methods are implemented in a publicly available package \url{http://github.com}.}
\subsection{GIVENS PARAMETRIZATION}

The first parametrization relies on a set of Givens rotations. 
Let us assume a Givens rotation in $\mathbb{R}^{2\times2}$, parametrized by a single parameter $\theta\in\mathbb{R}$ as
\begin{equation}
\begin{bmatrix}
\hphantom{-} \cos(\theta)& \sin(\theta) \\
-\sin(\theta)& \cos(\theta) \\
\end{bmatrix} \, .
\end{equation}
 
For every value $\theta$ the above matrix is unitary and for every unitary matrix $\mathbf{U} \in \mathbb{R}^{2,2}$ with positive determinant there exists $\theta$ such that $\mathbf{U} = \mathbf{U}(\theta)$. 
As shown by \parencite{Polcari2014}, for any $d = 2^{k},$ $k > 1$ a group of unitary matrices in the space $\mathbb{R}^{d,d}$ is parametrized by a set of Givens transformations. 
Below, we will generalize this approach to arbitrary $d > 1$.

Let $\mathbf{G}^{r,s}(\theta)$ denote an almost diagonal matrix, with a Givens rotation on $r$ and $s$ columns.
As an example consider $\mathbf{G}^{1,3}(\theta)$ in $\mathbb{R}^{4,4}$, i.e.,
\begin{equation}
\mathbf{G}^{1,3}(\theta) = \begin{bmatrix}
\hphantom{-} \cos(\theta) & 0 & \sin(\theta) & 0  \\
0 & 1 & 0 & 0 \\
- \sin(\theta) & 0 & \cos(\theta) & 0  \\
0 & 0 & 0 & 1  \\
\end{bmatrix}.
\end{equation}

\begin{theorem}
Let $\mathbf{U} \in \mathbf{R}^{d,d},$ $d > 1$ be a unitary matrix. 
Then there exist $\{(\mathbf{G}^{r,s}(\theta_{r,s})) \vert 1\leq r < s \leq d, \theta_i \in \mathbf{R}\}, $ such that 
$$\mathbf{U} = \prod_{1 < r < s}^{d,d}\mathbf{G}^{r,s}(\theta_{r,s}) \, .$$
\end{theorem}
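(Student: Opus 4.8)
The plan is to prove this by Givens elimination, i.e.~the orthogonal analogue of Gaussian elimination that reduces $\mathbf{U}$ to the identity by left-multiplication with rotations, and then to read off the claimed factorization by inverting those rotations. Before starting, I would observe that the statement can only hold for $\det\mathbf{U}=+1$: every $\mathbf{G}^{r,s}(\theta)$ has determinant $\cos^2\theta+\sin^2\theta=1$, so any product of them lies in $SO(d)$, and conversely the number of available factors $\binom{d}{2}=\tfrac{1}{2}d(d-1)=\dim SO(d)$ is exactly right. This matches the positive-determinant caveat already made for the $2\times 2$ case, so I would read the theorem as a statement about $\mathbf{U}\in SO(d)$.

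First I would recall the elementary fact underlying each step: for any pair of reals $a,b$ not both zero, the angle $\theta$ with $\cos\theta=a/\sqrt{a^2+b^2}$ and $\sin\theta=b/\sqrt{a^2+b^2}$ yields a $2\times 2$ rotation sending $(a,b)^{\top}$ to $(\sqrt{a^2+b^2},0)^{\top}$. Embedded as $\mathbf{G}^{r,s}(\theta)$, left-multiplication acts only on rows $r$ and $s$, so it can annihilate a single chosen subdiagonal entry in column $r$ while making the pivot in row $r$ nonnegative and leaving every other row untouched.

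Next I would run the elimination column by column. Processing column $r$ from $1$ to $d-1$, I use $\mathbf{G}^{r,s}(\theta_{r,s})$ for $s=r+1,\dots,d$ to zero out the entries $(s,r)$ one at a time; since each such rotation mixes only rows $r$ and $s$ with $s>r$, it never disturbs the zeros produced in earlier columns. Every unordered pair $(r,s)$ with $r<s$ is thereby used exactly once, so after $\binom{d}{2}$ rotations the accumulated product $\mathbf{G}:=\prod\mathbf{G}^{r,s}(\theta_{r,s})$ satisfies $\mathbf{G}\,\mathbf{U}=\mathbf{R}$ with $\mathbf{R}$ upper triangular. As $\mathbf{G}$ and $\mathbf{U}$ are orthogonal, so is $\mathbf{R}$, and an upper-triangular orthogonal matrix must be diagonal with entries $\pm 1$ (orthonormality of its columns forces the strictly upper entries to vanish). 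The elimination keeps the first $d-1$ pivots positive, hence equal to $+1$, and the identity $\det\mathbf{R}=\det\mathbf{U}=1$ then pins the last diagonal entry to $+1$ as well, so $\mathbf{R}=\mathbf{I}$.

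Finally, from $\mathbf{G}\,\mathbf{U}=\mathbf{I}$ I obtain $\mathbf{U}=\mathbf{G}^{-1}$; inverting a product reverses the order and inverts each factor, and since $\mathbf{G}^{r,s}(\theta)^{-1}=\mathbf{G}^{r,s}(-\theta)$ is again a Givens rotation on the same pair, $\mathbf{U}$ is exhibited as the desired product of $\binom{d}{2}$ rotations, one per pair $(r,s)$ with $r<s$. The main obstacle I anticipate lies not in the elimination but in the endgame bookkeeping: one must carry through the sign/determinant argument that forces $\mathbf{R}=\mathbf{I}$ (this is precisely where $\det\mathbf{U}=1$ is needed, and why no rotation beyond the $\binom{d}{2}$ is spare to correct a sign), and one must fix a definite ordering of the noncommuting factors so that the product written in the statement is unambiguous.
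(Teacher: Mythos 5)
Your proof is correct, but it takes a genuinely different route from the paper's. The paper argues by induction on the dimension: it writes the last column $u$ of $\mathbf{U}\in\mathbb{R}^{d+1,d+1}$ in hyperspherical coordinates (citing Blumenson), reads the angles $\theta_{1,d+1},\dots,\theta_{d,d+1}$ directly off that representation, applies $\mathbf{G}^{d,d+1}\mathbf{G}^{d-1,d+1}\cdots\mathbf{G}^{1,d+1}$ to map $u$ to a standard basis vector, and then recurses on the remaining $d\times d$ unitary block $\hat{\mathbf{U}}$. You instead run the standard non-recursive Givens-QR elimination: annihilate the subdiagonal entries column by column by left-multiplication, conclude that the resulting upper-triangular orthogonal matrix $\mathbf{R}$ is diagonal with entries $\pm 1$, force $\mathbf{R}=\mathbf{I}$ via the nonnegative pivots and the determinant, and invert the accumulated product. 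Each approach buys something: the paper's recursion produces the angles explicitly as polar coordinates of columns, which connects the parametrization to spherical geometry and is what the authors want for Section 4; your elimination is more elementary, fixes an unambiguous ordering of the noncommuting factors (which the paper's product notation $\prod_{1<r<s}^{d,d}$ leaves vague), and, importantly, makes the sign bookkeeping explicit. Your observation that the statement can only hold on $SO(d)$ is a genuine correction: as written, the theorem claims the factorization for every unitary $\mathbf{U}$, which is false for $\det\mathbf{U}=-1$ since every product of the $\mathbf{G}^{r,s}(\theta)$ has determinant $+1$. The paper acknowledges this restriction only for $d=2$ and in the corollary's (imprecise) ``positive definite unitary matrices'' phrasing, and its inductive step silently glosses the matching issue, namely whether the reduced block $\hat{\mathbf{U}}$ inherits determinant $+1$ so that the inductive hypothesis applies; your determinant endgame forcing $\mathbf{R}=\mathbf{I}$ handles exactly the point the paper leaves implicit.
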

\begin{proof}
The proof is carried out by induction. 
For $d = 2$ the group of Unitary matrices consists of all rotations which coincides with $\{\mathbf{G}^{1,2}(\theta) | \theta \in \mathbb{R}\}\, .$
Let $\mathbf{U} \in \mathbf{R}^{d+1,d+1}$, and let $u$ denote the last column of the matrix $\mathbf{U}$. 
Since $\mathbf{U}$ is unitary, $\|u\|_{2} = 1$, and therefore it is a point on a $d+1$-dimensional sphere and can be expressed in polar coordinates~\parencite{Blumenson60} as
\begin{equation}
u = \begin{bmatrix*}[l]
\cos({\theta_{1,d+1}}) \\
\sin({\theta_{1,d+1}}) \cos({\theta_{2,d+1}}) \\
\sin({\theta_{1,d+1}}) \sin({\theta_{2,d+1}})\cos({\theta_{3,d+1}}) \\
\vdots\\
\sin({\theta_{1,d+1}}) \sin({\theta_{2,d+1}})\ldots \cos({\theta_{d,d+1}}) \\
\sin({\theta_{1,d+1}}) \sin({\theta_{2,d+1}})\ldots \sin({\theta_{d,d+1}}) \\
\end{bmatrix*}.
\end{equation}
Therefore $u^{\top}\mathbf{G}^{d,d+1}(\theta_{d,d+1})$ has the last, $(d+1)^{\mathrm{th}},$ coordinate equal to zero and $d^{\mathrm{th}}$ coordinate equal to $\sin({\theta_{1,d+1}}) \sin({\theta_{2,d+1}})\ldots\sin({\theta_{d-1,d+1})}$. 
Therefore if, 
\begin{align}
    \mathbf{G}^{d+1} = &\mathbf{G}^{d,d+1}(\theta_{d,d+1})\mathbf{G}^{d-1,d+1}(\theta_{d-1,d+1})\\
    &\ldots\mathbf{G}^{1,d+1}(\theta_{1,d+1}) \, ,
\end{align}
then $ u^{\top}\mathbf{G}^{d+1} = (1,0,0,\ldots,0)^{\top}.$ 
And because $\mathbf{U}$ is unitary, it holds that 
\begin{equation}
\mathbf{U}\mathbf{G}^{d+1} =
\begin{bmatrix}
0 & 1 \\
\hat{\mathbf{U}} & 0 \\
\end{bmatrix},   
\end{equation}
where $\hat{\mathbf{U}}$ is again unitary of dimension $\mathbf{R}^{d,d}$.
Therefore, an inductive assumption can be applied, which completes the proof.
\end{proof}
The corollary of this theorem is that 
$\prod_{1 < r < s}^{d,d}\mathbf{G}^{r,s}(\theta_{r,s}),$ parametrizes a whole group of positive definite unitary matrices in $\mathbb{R}^{d,d}$ using $\frac{1}{2}d(d-1)$ parameters. 
Note that the parametrization is not unique due to periodicity of goniometric functions. 
Also notice that for $\theta_{\cdot}=0,$ the matrix $\mathbf{U}$ is equal to identity.

\subsection{HOUSEHOLDER PARAMETERIZATION}
The second parametrization relies on the representation of unitary matrix $\mathbf{U} \in \mathbb{R}^{d,d}$ as a product of at most $d$ Householder transformations~\parencite{Urias2010}, i.e.~$\mathbf{U} = \mathbf{P}_d \mathbf{P}_{d-1} \ldots \mathbf{P}_1,$ where each $\mathbf{P}_i$ is defined by vector $\mathbf{y}_i$ as
\begin{equation}
    \mathbf{P}_i = \mathbf{I} - t_i \mathbf{y}_i \mathbf{y}_i^{\top},
\end{equation}
for $t_i = 2/\left\| \mathbf{{y}_i} \right\|^2$. 
Thus, by parametrizing the unitary matrix by vectors $\mathbf{y}_i,$ we can effectively generate a whole group of unitary matrices. 
Note that this construction over-parametrizes the group, as it is parametrized by $d^2$ parameters, where it has only $\frac{1}{2}d(d-1)$ degrees of freedom. 

\subsection{COMPUTATIONAL COMPLEXITY}
\textbf{Givens parametrization} has a computational complexity of $2d(d-1)$ multiplications and $d(d-1)$ additions during inference, and requires three times higher complexity of backpropagation, if intermediate results after individual givens rotations are not stored.
However, an on-the-fly computation from the output, which reduces the computational complexity, has been proposed in~\parencite{gomez2017reversible}. 
Without storing intermediate results, the memory requirements are negligible, as all operations are in-place.
    
\textbf{Householder parametrization} has a computational complexity of $2d^2$ multiplications and the same number of additions during inference. The complexity of backpropagation is again three times higher than that of the inference if we assume that intermediate results are not stored, but calculated from the output.

From the above, we see that Givens parametrization has lower computational complexity than that of the Householder. 
While the Houselder parametrization allows off-the-shelf automatic differentiation (AD), this has only recently been explored for Givens parametrizations \parencite{lezcano2019cheap}.
Therefore, we leverage the work by \parencite{lezcano2019cheap} to apply the Givens parametrization.

\section{RELATED WORK}
The proposed approach combines mixture models, probabilistic circuits, flow models, and representation of unitary matrices. Each of these topics has a rich literature, but the proposed combination is unique. Below, we review the most relevant work combining the transformation of variables and mixtures/PCs.

\textbf{Sum-Product Networks} 
The works by \parencite{tan2019hierarchical} and \parencite{Trapp2020} can be understood as flexible extension of SPN that use some kind of transformation in the leave nodes. In particular, \parencite{tan2019hierarchical} proposed to combine SPNs with variational autoencoders (VAE) on the leaves, while \parencite{Trapp2020} proposed to extend SPNs with Gaussian processes at leaves.
However, both approach do not exploit invertible transformations as internal nodes and are conceptually different to our proposal.

\textbf{Unitary matrices} 
The use of unitary matrices is beneficial for autoencoders~\parencite{tomczak2016improving}, convolution layers \parencite{putzky2019invert}, and in recurrent neural networks~\parencite{arjovsky2016unitary}.
However, to the best of our knowledge, they have not been applied to invertible flows as of now.

\textbf{Mixture models} Numerous extensions of mixture models share directions with our proposal. 
The use of mixture models on the latent layer of variational autoencoder~\parencite{dilokthanakul2016deep} may be understood as a transformation node as the root followed by a summing node. 
Mixtures of flow models have been recently suggested in~\parencite{papamakarios2019normalizing} using a shallow structure but without any experimental evidence. 
Further, optimal transport has been used within Gaussian mixture models in~\parencite{chen2018optimal}.

\section{EXPERIMENTS}
\begin{figure*}[t!]
\centering
\begin{subfigure}[t]{0.3\textwidth}
    \centering
    \includegraphics[height=0.2\textheight]{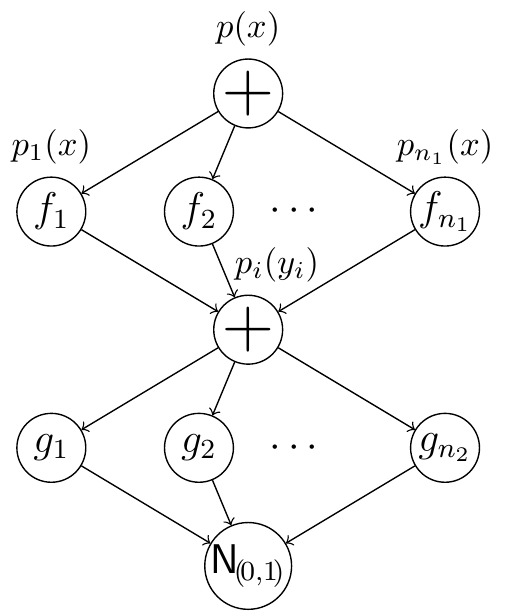}
    \caption{Sharing sum and transformation nodes}
\end{subfigure}%
~
\begin{subfigure}[t]{0.3\textwidth}
    \centering
    \includegraphics[height=0.2\textheight]{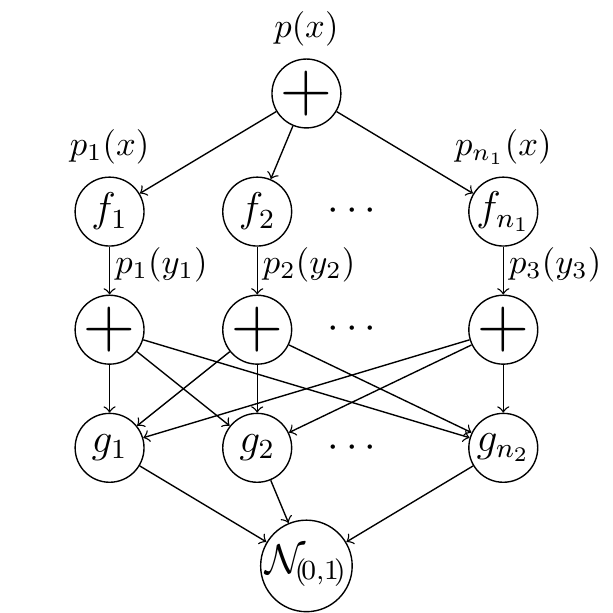}
    \caption{Sharing only transformation nodes}
\end{subfigure}%
~ 
\begin{subfigure}[t]{0.3\textwidth}
    \centering
    \includegraphics[height=0.2\textheight]{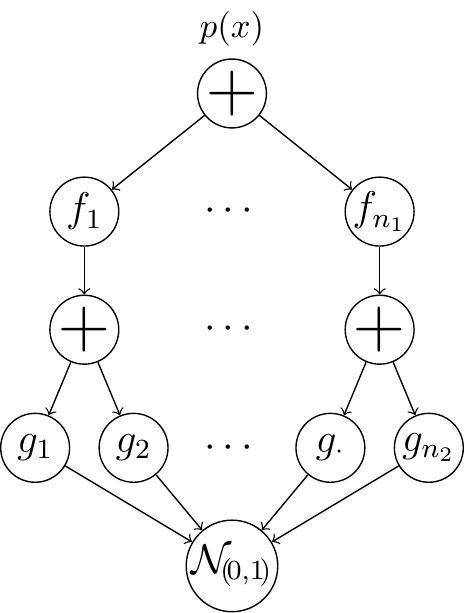}
    \caption{No sharing}
\end{subfigure}%
\caption{\label{fig:sharing} Different modes of sharing nodes (parameters) in SPTNs. 
The model in Subfigure (a) shares sum nodes $\oplus$ and transformation nodes $\{g_i\}_{i=1}^{n_2}$, which means that all $\{g_i\}_{i=1}^{n_1}$ have the the same child node $\oplus$; 
that in Subfigure (b) shares only transformation nodes $\{g_i\}_{i=1}^{n_2};$
and finally that in Subfigure (c) does not share any node except leaf, which does not have any parameters.}
\end{figure*}

As done in prior art, e.g.~\parencite{Poon11,Peharz2019RAT,vergari2019automatic}, we compared the performance of G-SPTNs for density estimation by maximizing the log-likelihood on the training set. Further, we additionally assessed their effectiveness in detecting anomalies. 
In the first experiment, G-SPTNs have been compared to SPNs, GMMs, and Masked Auto-regressive Flows~\parencite{papamakarios2017masked} (MAF). 
In the second experiment, we additionally compare agains Variational Autoencoders (VAE), Isolation Forests (IForest), and k-Nearest Neighbor (kNN) methods.

%due to their good performance on this task~\parencite{pevny2016loda,vskvara2018generative}. 

Experiments were carried out on 21 real-valued problems designed for anomaly detection~\parencite{pevny2016loda,vskvara2018generative}. These datasets are derived from the UCI database by following the approach of~\parencite{emmott2013systematic}, which converts multi-class datasets into binary classification task such that complexity of the classification problem is maximised. 
All models are trained on ``normal'' / majority class.
Respectively, the test log-likelihood is also reported on the majority class, while the AUC is estimated with respect to the conditional probabilities of both classes. 
All experiments were repeated five times with different random division of data into training (64\%), validation (16\%), and testing sets (20\%).

\subsection{ESTIMATING PARAMETERS}
Since SPTNs are a strict superset of SPNs, which are a strict superset of GMMs, we have used the same method to estimate the parameters of each models. 
In particular, we used stochastic gradient descend to maximize the log-likelihood as done in \parencite{Peharz2019RAT}. 
Since parameters of transformation nodes in SPTNs are differentiable, we can apply automatic differentiation to learn all parameters of SPTNs. 
Note that we used Adam~\parencite{kingma2014adam} with a batchsize of 100 in all experiments.

\subsection{COMPARED MODELS}
%Ideally, A complete inference should include not only estimation of parameters, but also the structure of the model~\parencite{bernardo2009bayesian,Trapp2019}.
Since advanced structure learning in SPTN is not yet available, we have used a random sampling of architectures to learn both SPNs and SPTNs, akin to \cite{Peharz2019RAT,RashwanZP16}.
The best structure was selected on the validation set. 

\label{sub:models}
\textbf{Gaussian Mixture Model}
In case of GMMs, the only free architectural parameter is the number of components. We trained mixture models with $n \in \{2,4,8,16,\ldots,512\}$ components and full covariance Gaussian distributions implemented using an affine transformation before each leaf node with $\textsf{N}(0,\mathbf{I}).$ 

\textbf{Sum-Product network}
In case of SPNs, we have varied number of children of each sum node, $n \in \{2,4,8,16,\ldots,128\},$ the number of partitions under each product node, $b \in \{1,2,4,8,16,32\},$ and the number of layers, $l \in \{1, 2, 3, 4, 5\},$ where by layer we mean a combination of a sum and product node. %Note that some architectures, especially those with the high number of partitions and a large number of layers, are not reliable for some problems. 
Leaf nodes were fixed to $\textsf{N}(0,\mathbf{I}).$ 

\textbf{Affine Gaussian Sum-Product-Transform network}
To decrease the degrees of freedom in architecture search for SPTNs, we omitted product nodes in our architecture search. The sampled architectures had $l \in \{1, 2, 3\},$ layers (a layer is the combination of a sum node followed by an affine transformation node) and the number of children under each sum nodes, $n \in \{2,4,8,16\}.$ 
We also distinguished between architectures with \{no sharing, sharing of transformation nodes, sharing of sum and transformation nodes\} as outlined in Figure~\ref{fig:sharing}. Leaf nodes were again fixed to $\textsf{N}(0,\mathbf{I}).$ 

\textbf{Masked auto-regressive flows}
In case of MAFs, we performed a similar random search for the architecture as for G-SPTNs. 
We randomly sampled the number of masked auto-regressive layers~\parencite{germain2015made} $l\in \{5,10,20\},$ the number of layers in these layers $m \in \{1,2,3,4\},$ and the number of neurons in each layer $k\in\{10,20,40,80\}.$ The non-linearity was fixed to $\tanh,$ as was used in the accompanying material \url{https://github.com/gpapamak/maf}. 
Parameters of MAFs have been learned as described above, by maximising the log-likelihood~\parencite{papamakarios2017masked}. 
Similarly to all the above models, we performed 10,000 optimization steps.

\textbf{K-nearest neighbor} 
In case of k-NNs, we varied the number of nearest neighbors, $k$, and the anomaly score ($\kappa$, $\delta$, $\gamma$) as defined in~\parencite{harmeling2006outliers}). 

\textbf{Isolation forests}
In case of IForest, we varied the number of samples used to construct the individual trees ($256$, $512$, $1024$). 

\textbf{Variational auto-encoder} 
In case of VAE, we sampled the number of latent dimension from $\{2,5,10,20,40\},$ number of neurons in the hidden layers $\{10,20,40,80\},$ number of layers of the encoder and decoder,$\{1,2,3,4\}$, and the type of non-linearity, $\{\mathrm{tanh}, \mathrm{relu},\mathrm{leakyrelu}\}.$

SPTNs, SPNs, and GMMs were implemented using the same library available at \url{http://github.com}designed to implement arbitrarily DAGs containing sum, product, transformation nodes, and leaves represented by their density functions. 
All algorithms were trained for 10,000 iterations.
%, which was the restriction imposed mainly by available computational resources. 
In the case of (G)-SPTN, MAF, and VAE we restricted the random search to 100 architectures or 3 days of total CPU time per problem and repetition of the problem. 

\subsection{DENSITY ESTIMATION}
\label{sub:density}
Table~\ref{tab:likelihood} shows the average test log-likelihood of G-SPTNs, SPNs, GMMs, and MAFs. 
The average log-likelihood for $\{x_i\}_{i=1}^n$ was calculated as $\frac{1}{n}\sum_{i=1}^n \log p(x_i)$ and reported values are macro averages over five repetitions of the experiment. 
According to results, on 15 out of 21 datasets, G-SPTNs obtain higher log-likelihood, and in some cases like miniboone, statlog-segment, and cardiotocography the improvement is significant. 
Contrary, the difference of SPTN to the best model in \texttt{Waveforms}, \texttt{Pendigits} (less than 0.1) and \texttt{Wine} is negligible. 
The only significant difference is on \texttt{wall-following-robot}. 
We conjecture this to be caused by the omission of product nodes in our architecture search. The poor performance of MAFs is caused by over-fitting, which can be seen from a high log-likelihood on training data (shown in Table~\ref{tab:trainlkl} in Appendix).

\begin{table}[t]
\begin{center}
\resizebox{\columnwidth}{!}{\begin{tabular}{lrrrr}
\hline\hline
\textbf{dataset} & \textbf{G-SPTN} & \textbf{SPN} & \textbf{GMM}& \textbf{MAF} \\ \hline
breast-cancer-wisconsin & \color{blue}{\textbf{-0.07}} & -20.55 & -6.05 & -1874.29 \\
cardiotocography & \color{blue}{\textbf{45.91}} & 11.06 & 10.95 & -598.63 \\
magic-telescope & -4.12 & -5.78 & -4.58 & \color{blue}{\textbf{-3.44}} \\
pendigits & -1.16 & -6.51 & -2.3 & \color{blue}{\textbf{1.21}} \\
pima-indians & \color{blue}{\textbf{-7.35}} & -8.18 & -8.7 & -68.81 \\
wall-following-robot & -12.59 & \color{blue}{\textbf{-4.45}} & -7.9 & -21.08 \\
waveform-1 & -23.87 & \color{blue}{\textbf{-23.85}} & -23.9 & -29.56 \\
waveform-2 & -23.91 & \color{blue}{\textbf{-23.85}} & -23.89 & -25.19 \\
yeast & \color{blue}{\textbf{8.22}} & -0.62 & -3.17 & 0.28 \\
ecoli & \color{blue}{\textbf{0.66}} & -3.21 & -3.79 & -3.93 \\
ionosphere & \color{blue}{\textbf{-11.75}} & -22.15 & -12.69 & -3457.46 \\
iris & \color{blue}{\textbf{-1.79}} & -2.28 & -1.87 & -53.97 \\
miniboone & \color{blue}{\textbf{162.46}} & 73.75 & 43.53 & -965573.45 \\
page-blocks & \color{blue}{\textbf{12.46}} & 2.58 & 3.75 & 5.67 \\
parkinsons & \color{blue}{\textbf{-3.55}} & -19.68 & -10.13 & -2931.57 \\
sonar & \color{blue}{\textbf{-74.8}} & -74.88 & -84.88 & -18991.33 \\
statlog-satimage & \color{blue}{\textbf{4.6}} & -9.65 & 2.52 & 4.1 \\
statlog-segment & \color{blue}{\textbf{34.39}} & 9.63 & 11.07 & -191.06 \\
statlog-vehicle & \color{blue}{\textbf{-2.76}} & -11.73 & -5.38 & -106.13 \\
synthetic-control-chart & \color{blue}{\textbf{-39.51}} & -43.92 & -40.21 & -9433.77 \\
wine & -13.61 & \color{blue}{\textbf{-13.39}} & -13.92 & -3074.69 \\
rank & \color{blue}{\textbf{1.38}} & 2.57 & 2.62 & 3.43 \\\hline\hline
\end{tabular}}
\end{center}
\caption{\label{tab:likelihood}
Average log-likelihood of the best models (higher is better) on the test set using five repetitions. Each best model was selected according to the performance on the validation set. %.
The overall best performing model is highlighted in bold blue.
The average rank is calculated according to the ranking of each models on each problem (lower is better).
}
\end{table}

\textbf{Influence of parametrization of Unitary matrices}
Although Givens and Householder parametrizations generate the whole group of Unitary matrices, they might influence learning, for example, due to overparameterization in Householder or more natural representation of \emph{identity} in Givens. According to log-likelihoods of G-SPTNs of models with each parametrization (Table~\ref{tab:unitary} in Supplementary), it is impossible to state at the moment if there is any difference, as their average ranks overall problems was 1.52 (Givens) and 1.48 (Householder).

\textbf{Influence of node sharing}
Similar to SPNs, SPTN allow flexible sharing of nodes within the network. 
We have compared three cases: no sharing, sharing transformation nodes, and sharing sum and transformation nodes, which are outlined in Figure~\ref{fig:sharing}. According to log-likelihood shown in Appendix in Table~\ref{tab:sharing}, models sharing sum and transformation nodes are inferior to models without sharing and to those sharing only transformation nodes. 
This is somewhat surprising, since the number of parameters of models sharing only transformation nodes is similar to those sharing sum and transformation nodes.
We assume that the significantly lower performance is due to overfitting.

\textbf{Influence of (non)-linearity}
Transformation nodes in SPTN permit non-linear functions after the affine transformation (at the expense of losing tractable marginalization). 
To observe, if non-linear transformations improves the fit, we have compared SPTN with linear, leaky-relu~\parencite{maas2013rectifier}, and selu~\parencite{klambauer2017self} transformations applied element-wise after affine transformation in transformation nodes. 
According to the log-likelihood scores reported in Table~\ref{tab:function} (in Appendix), G-SPTN with linear functions perform better than SPTNs with selu and leaky relu, as their average rank over problems was $1.12,$ $3.0,$ and $1.88.$ respectively. 
Therefore, it seems that affine transformations are sufficient (and potentially prevent overfitting) for the respective datasets.

\subsection{ANOMALY DETECTION}% / OUT OF DISTRIBUTION DETECTION}
\begin{table}[t]
\resizebox{\columnwidth}{!}{\begin{tabular}{lrrrrrrr}
\hline\hline
\textbf{dataset} & \textbf{G-SPTN} & \textbf{SPN} & \textbf{GMM} & \textbf{k-NN} & \textbf{VAE} & \textbf{IForest} & \textbf{MAF}\\
breast-cancer-wisconsin & \color{blue}{\textbf{0.97}} & \color{blue}{\textbf{0.97}} & 0.95 & 0.94 & \color{blue}{\textbf{0.97}} & \color{blue}{\textbf{0.97}} & 0.88 \\
cardiotocography & 0.75 & 0.72 & 0.56 & 0.81 & \color{blue}{\textbf{0.84}} & 0.7 & 0.63 \\
magic-telescope & 0.97 & 0.95 & 0.96 & 0.96 & 0.85 & 0.89 & \color{blue}{\textbf{0.98}} \\
pendigits & \color{blue}{\textbf{0.99}} & 0.98 & \color{blue}{\textbf{0.99}} & \color{blue}{\textbf{0.99}} & 0.85 & 0.94 & 0.98 \\
pima-indians & 0.89 & 0.88 & \color{blue}{\textbf{0.91}} & \color{blue}{\textbf{0.91}} & 0.89 & 0.9 & 0.78 \\
wall-following-robot & 0.84 & \color{blue}{\textbf{0.87}} & 0.83 & 0.84 & 0.68 & 0.76 & \color{blue}{\textbf{0.87}} \\
waveform-1 & 0.82 & 0.84 & 0.79 & 0.82 & \color{blue}{\textbf{0.87}} & 0.82 & 0.68 \\
waveform-2 & 0.84 & 0.84 & 0.81 & 0.83 & \color{blue}{\textbf{0.87}} & 0.83 & 0.71 \\
yeast & 0.78 & 0.75 & 0.79 & 0.76 & \color{blue}{\textbf{0.82}} & 0.67 & 0.78 \\
ecoli & 0.9 & 0.9 & \color{blue}{\textbf{0.91}} & \color{blue}{\textbf{0.91}} & 0.81 & 0.82 & 0.86 \\
ionosphere & 0.92 & \color{blue}{\textbf{0.99}} & 0.9 & 0.95 & 0.9 & 0.89 & 0.78 \\
iris & \color{blue}{\textbf{0.97}} & 0.96 & 0.96 & 0.92 & 0.71 & 0.9 & 0.86 \\
miniboone & 0.88 & 0.87 & 0.88 & 0.86 & \color{blue}{\textbf{0.94}} & 0.84 & 0.76 \\
page-blocks & \color{blue}{\textbf{0.98}} & \color{blue}{\textbf{0.98}} & \color{blue}{\textbf{0.98}} & \color{blue}{\textbf{0.98}} & \color{blue}{\textbf{0.98}} & 0.97 & \color{blue}{\textbf{0.98}} \\
parkinsons & 0.77 & 0.67 & 0.77 & 0.81 & \color{blue}{\textbf{0.87}} & 0.75 & 0.71 \\
sonar & 0.59 & 0.68 & 0.65 & 0.7 & \color{blue}{\textbf{0.79}} & 0.66 & 0.55 \\
statlog-satimage & 0.86 & 0.97 & 0.8 & \color{blue}{\textbf{0.98}} & 0.96 & 0.94 & 0.89 \\
statlog-segment & 0.82 & \color{blue}{\textbf{0.87}} & 0.84 & 0.85 & 0.64 & 0.67 & 0.84 \\
statlog-vehicle & \color{blue}{\textbf{0.79}} & 0.71 & \color{blue}{\textbf{0.79}} & 0.76 & 0.75 & 0.73 & 0.66 \\
synthetic-control-chart & 0.88 & \color{blue}{\textbf{0.98}} & 0.87 & \color{blue}{\textbf{0.98}} & 0.93 & 0.91 & 0.71 \\
wine & \color{blue}{\textbf{0.98}} & \color{blue}{\textbf{0.98}} & \color{blue}{\textbf{0.98}} & \color{blue}{\textbf{0.98}} & 0.91 & 0.89 & 0.86 \\
rank & 2.7 & 3.0 & 3.48 & \color{blue}{\textbf{2.57}} & 3.57 & 5.0 & 5.29 \\\hline\hline
\end{tabular}}
\caption{\label{tab:anomaly} Average Area Under the ROC Curve (AUC) for each model calculated on the test set using five repetitions (higher is better). The best model for each approach was again selected on the validation set. The best model is highlighted in bold blue.}
\end{table}

As shown in~\parencite{vergari2019automatic,Peharz2019RAT}, SPNs can be used to effectively detect anomalies. 
Therefore, we assessed G-SPTNs, SPNs, GMMs, VAEs, IForests, and k-NNs as anomaly detectors. 
In case of G-SPTNs, SPNs, and GMMs we used the negative log-likelihood as an anomaly score as proposed in~\parencite{vergari2019automatic}. 
Note that k-NN are known to obtain good performance on this task~\parencite{pevny2016loda,vskvara2018generative}.

The quality of anomaly detection was measured using the Area Under the ROC Curve (AUC), which is the standard within the field of anomaly detection. As above, the hyperparameters/architecture was selected according to AUC estimated on the validation set,\footnote{We admit that selecting models based on their performance on validation dataset implies that few anomalies are available, which is in a slight disagreement with a typical assumption in the field that anomalies are not available during training. Since the problem of model selection is still unresolved in the field of anomaly detection, we do not aim to solve it here, and use few anomalies for this.} while the reported values are an average over five repetitions. 
Note that the AUC was estimated from the ``normal'' class and that \emph{easy} anomalies~\parencite{emmott2013systematic,pevny2016loda} as more difficult anomalies are not anomalies in the sense that they are not located in a region of the low density of the normal class.

Table~\ref{tab:anomaly} shows average AUCs of the compared models on different datasets. 
The overall best method to identify anomalies is k-NN, which has been previously reported to obtain very competitive results \parencite{vskvara2018generative,pevny2016loda}.
The proposed G-SPTN scored second with SPN being the third. To our surprise, VAEs frequently considered as a modern state of the art performed inferior to SPNs (and also to k-NN as has been already reported in~\parencite{vskvara2018generative}). 

\section{CONCLUSION}
In this paper we suggest to extended the compositions used in Probabilistic Circuits to additionally include invertible transformations. Within this new class, called Sum-Product-Transform Networks (SPTN), two frameworks, Probabilistic Circuits, and Flow models, unite and each becomes a special case. Since models in SPTNs, in general, do not support efficient marginalization and conditioning, an important sub-class (called G-SPTN) for which these operations are efficient was identified. G-SPTN restrict transformations to be affine and leaf nodes to be Gaussian distributions. The affine transformations keep their projection matrices in SVD forms, which is facilitated by parametrizing groups of unitary matrices, which is treated in detail. 

The proposed approach was experimentally compared to Sum-Product Networks (SPNs), Gaussian mixture models, and Masked autoregressive flows on a corpus of 21 publicly available problems. Because SPTNs unify flow models and SPNs, it should not be surprising that the results confirm their good modeling properties. 
But importantly, this good performance was achieved by G-SPTN, which still feature efficient marginalization and conditioning. 

Despite good experimental results, there remain several open problems some of which we plan to address in the future. 
Specifically, a major challenge in learning SPNs is structure learning, which has inspired many sophisticated techniques, e.g.~\cite{vergari2015simplifying,Peharz2019RAT,Trapp2019}. 
Learning structures for SPTNs is even more challenging and we hope that some of the existing technique for SPNs can be extended to SPTNs in the future.
Moreover, we want to explore more efficient parameter learning for SPTN, as done in the SPN literature, and conduct a more in depth investigation of the capacities of SPTNs for anomaly detection.

\printbibliography

\appendix

\begin{table}
\begin{tabular}{lrrr}
\hline\hline
\textbf{dataset} & \textbf{id.} & \textbf{lrelu} & \textbf{selu} \\\hline
breast-cancer-wisconsin & -0.07 & -89.18 & \color{blue}{\textbf{0.04}} \\
magic-telescope & \color{blue}{\textbf{-4.12}} & -28.47 & -5.73 \\
pendigits & \color{blue}{\textbf{-1.16}} & -41.54 & -3.45 \\
pima-indians & \color{blue}{\textbf{-7.35}} & -21.92 & -9.48 \\
wall-following-robot & \color{blue}{\textbf{-12.59}} & -61.55 & -14.67 \\
waveform-1 & \color{blue}{\textbf{-23.87}} & -60.64 & -25.84 \\
waveform-2 & \color{blue}{\textbf{-23.91}} & -59.5 & -25.88 \\
yeast & \color{blue}{\textbf{8.22}} & -28.4 & -4.28 \\
ecoli & \color{blue}{\textbf{0.66}} & -18.71 & -3.31 \\
ionosphere & -11.75 & -104.7 & \color{blue}{\textbf{-10.66}} \\
iris & \color{blue}{\textbf{-1.79}} & -12.26 & -2.34 \\
parkinsons & \color{blue}{\textbf{-3.55}} & -66.73 & -6.77 \\
sonar & \color{blue}{\textbf{-74.8}} & -221.28 & -78.78 \\
statlog-satimage & \color{blue}{\textbf{4.6}} & -118.23 & 0.98 \\
statlog-vehicle & \color{blue}{\textbf{-2.76}} & -55.62 & -4.25 \\
synthetic-control-chart & \color{blue}{\textbf{-39.51}} & -197.09 & -44.59 \\
wine & \color{blue}{\textbf{-13.61}} & -60.51 & -14.84 \\
rank & \color{blue}{\textbf{1.12}} & 3.0 & 1.88 \\\hline\hline
\end{tabular}
\caption{\label{tab:function}
Average log-likelihood on test set from five repetitions of an experiment  (higher is better) of SPTN models with different functions, namely identity (caption ``id.''), leaky-relu (caption ``leaky rely''), and selu, in transformation nodes. The best model for each combination of problem and cross-validation fold was selected according to log-likelihood on validation data. Average rank is calculated from ranks of models on each problem (lower is better). The best model is highlighed in bold blue. }
\end{table}

\label{app:parametrization}
\begin{table}
\begin{center}
\begin{tabular}{rrr}
\hline\hline
\textbf{dataset} & \textbf{givens} & \textbf{householder} \\
\hline
breast-cancer-wisconsin & -0.46 & \color{blue}{\textbf{0.08}} \\
cardiotocography & 39.23 & \color{blue}{\textbf{45.91}} \\
magic-telescope & \color{blue}{\textbf{-4.12}} & -4.55 \\
pendigits & \color{blue}{\textbf{-1.26}} & -1.52 \\
pima-indians & -9.05 & \color{blue}{\textbf{-7.35}} \\
wall-following-robot & \color{blue}{\textbf{-12.66}} & -13.93 \\
waveform-1 & -23.9 & \color{blue}{\textbf{-23.88}} \\
waveform-2 & -23.98 & \color{blue}{\textbf{-23.92}} \\
yeast & 4.5 & \color{blue}{\textbf{8.22}} \\
ecoli & -3.4 & \color{blue}{\textbf{0.66}} \\
ionosphere & \color{blue}{\textbf{-10.48}} & -13.04 \\
iris & -1.98 & \color{blue}{\textbf{-1.79}} \\
miniboone & \color{blue}{\textbf{162.46}} & 112.08 \\
page-blocks & \color{blue}{\textbf{12.53}} & 11.62 \\
parkinsons & -8.91 & \color{blue}{\textbf{-3.55}} \\
sonar & \color{blue}{\textbf{-73.97}} & -87.1 \\
statlog-satimage & \color{blue}{\textbf{4.55}} & 3.76 \\
statlog-segment & 31.87 & \color{blue}{\textbf{33.89}} \\
statlog-vehicle & \color{blue}{\textbf{-2.82}} & -3.35 \\
synthetic-control-chart & \color{blue}{\textbf{-39.51}} & -40.3 \\
wine & -18.38 & \color{blue}{\textbf{-13.61}} \\ \hline 
average rank & 1.52 & \color{blue}{\textbf{1.47}} \\\hline\hline
\end{tabular}
\end{center}
\caption{\label{tab:unitary}Average log-likelihood  of G-SPTN models with affine Tranformation nodes realized either by set of \emph{Givens} rotations or by a set of \emph{Householder} rotations.  Average log-likelihood is from  test sets of  five repetitions of an experiment. The best model for each combination of problem and cross-validation fold was selected according to log-likelihood on validation data. The best model is highlighed in bold blue.}
\end{table}

\begin{table}[t]
\begin{center}
\begin{tabular}{lrrr}
\hline\hline
\textbf{dataset} & \textbf{none} & \textbf{trans.} & \textbf{all} \\\hline
breast-cancer-wisconsin & -0.36 & \color{blue}{\textbf{0.33}} & 0.03 \\
cardiotocography & \color{blue}{\textbf{44.13}} & 41.22 & 38.7 \\
magic-telescope & \color{blue}{\textbf{-4.12}} & -4.47 & -4.79 \\
pendigits & -1.54 & \color{blue}{\textbf{-1.34}} & -1.57 \\
pima-indians & -8.18 & \color{blue}{\textbf{-7.75}} & -8.1 \\
wall-following-robot & -13.43 & -13.56 & \color{blue}{\textbf{-12.64}} \\
waveform-1 & \color{blue}{\textbf{-23.87}} & \color{blue}{\textbf{-23.87}} & -23.9 \\
waveform-2 & -23.98 & \color{blue}{\textbf{-23.91}} & -23.92 \\
yeast & 7.56 & 6.77 & \color{blue}{\textbf{7.64}} \\
ecoli & 0.14 & -0.36 & \color{blue}{\textbf{0.47}} \\
ionosphere & \color{blue}{\textbf{-11.57}} & -12.19 & -14.59 \\
iris & -1.79 & -1.74 & \color{blue}{\textbf{-1.73}} \\
miniboone & \color{blue}{\textbf{162.46}} & 129.08 & 81.62 \\
page-blocks & 11.62 & \color{blue}{\textbf{12.53}} & 11.71 \\
parkinsons & -3.95 & \color{blue}{\textbf{-3.58}} & -4.42 \\
sonar & -82.23 & -79.1 & \color{blue}{\textbf{-78.21}} \\
statlog-satimage & \color{blue}{\textbf{4.65}} & 4.22 & 3.91 \\
statlog-segment & \color{blue}{\textbf{34.58}} & 32.92 & 33.31 \\
statlog-vehicle & \color{blue}{\textbf{-2.82}} & -3.72 & -3.58 \\
synthetic-control-chart & -40.3 & \color{blue}{\textbf{-39.51}} & -40.68 \\
wine & \color{blue}{\textbf{-13.61}} & \color{blue}{\textbf{-13.61}} & \color{blue}{\textbf{-13.61}} \\
rank & 1.86& \color{blue}{\textbf{1.80}} & 2.14 \\\hline\hline
\end{tabular}
\end{center}
\caption{\label{tab:sharing}Average log-likelihood  of G-SPTN models without any sharing of nodes captioned ``none'' (see Figure~\ref{fig:sharing}(c), sharing Tranformation nodes only captioned ``Affine''  (see Figure~\ref{fig:sharing}(b), and sharing sum and Tranformation nodes captioned all  (see Figure~\ref{fig:sharing}(a). Average log-likelihood is from  test sets of  five repetitions of an experiment. The best model for each combination of problem and cross-validation fold was selected according to log-likelihood on validation data. The best model is highlighed in bold blue.}
\end{table}

\begin{table}[t]
    \centering
    \resizebox{\columnwidth}{!}{\begin{tabular}{rrrrr}
\hline\hline
\textbf{dataset} & \textbf{G-SPTN} & \textbf{SPN} & \textbf{GMM} & \textbf{MAF} \\
\hline
breast-cancer-wisconsin & \color{blue}{\textbf{65.18}} & -13.4 & 10.6 & 62.91 \\
cardiotocography & 54.86 & 11.98 & 12.69 & \color{blue}{\textbf{59.52}} \\
magic-telescope & -3.24 & -5.68 & -3.96 & \color{blue}{\textbf{-2.72}} \\
pendigits & 4.84 & -6.13 & -0.88 & \color{blue}{\textbf{7.71}} \\
pima-indians & \color{blue}{\textbf{13.9}} & -5.55 & 0.59 & 1.1 \\
wall-following-robot & 6.13 & -3.56 & -0.31 & \color{blue}{\textbf{15.28}} \\
waveform-1 & \color{blue}{\textbf{-1.07}} & -23.54 & -9.29 & -9.25 \\
waveform-2 & \color{blue}{\textbf{1.6}} & -23.6 & -9.16 & -10.75 \\
yeast & \color{blue}{\textbf{13.52}} & 0.66 & 0.37 & 11.08 \\
ecoli & \color{blue}{\textbf{12.28}} & -1.18 & 1.35 & 4.71 \\
ionosphere & 34.79 & -7.1 & 6.32 & \color{blue}{\textbf{85.17}} \\
iris & \color{blue}{\textbf{11.03}} & -0.71 & 0.96 & 1.45 \\
miniboone & 162.87 & 73.71 & 43.57 & \color{blue}{\textbf{181.29}} \\
page-blocks & 13.33 & 2.89 & 4.09 & \color{blue}{\textbf{21.38}} \\
parkinsons & 22.34 & -5.87 & 19.94 & \color{blue}{\textbf{58.29}} \\
sonar & 19.54 & -30.95 & 1.01 & \color{blue}{\textbf{162.87}} \\
statlog-satimage & 5.94 & -9.27 & 2.84 & \color{blue}{\textbf{21.07}} \\
statlog-segment & 47.33 & 11.51 & 13.83 & \color{blue}{\textbf{50.28}} \\
statlog-vehicle & 9.81 & -8.64 & 4.3 & \color{blue}{\textbf{21.59}} \\
synthetic-control-chart & -26.27 & -41.04 & -32.65 & \color{blue}{\textbf{106.74}} \\
wine & \color{blue}{\textbf{38.01}} & -4.8 & 11.97 & 22.35 \\\hline\hline
\end{tabular}}
\caption{
    \label{tab:trainlkl}
    Average log-likelihood of models (higher is better) on training set  from five repetitions of an experiment. The best model for each combination of problem and cross-validation fold was selected according to log-likelihood on training set, as the purpose of this Table is to show the over-fitting of MAF.}
\end{table}

\end{document}